\documentclass{article}

\usepackage[preprint]{neurips_2026}

\usepackage[utf8]{inputenc} % allow utf-8 input
\usepackage[T1]{fontenc}    % use 8-bit T1 fonts
\usepackage{hyperref}       % hyperlinks
\usepackage{silence}
\usepackage{url}            % simple URL typesetting
\usepackage{booktabs}       % professional-quality tables
\usepackage{amsfonts}       % blackboard math symbols
\usepackage{nicefrac}       % compact symbols for 1/2, etc.
\usepackage{microtype}      % microtypography
\usepackage{xcolor}         % colors
\usepackage{amsmath}
\usepackage{graphicx}
\usepackage{multirow}
\usepackage{amsmath, amssymb, amsthm}
\newtheorem{proposition}{Proposition}
\newtheorem{remark}{Remark}
\usepackage{tikz}
\newcommand*\circled[1]{\tikz[baseline=(char.base)]{
            \node[shape=circle,draw,inner sep=1pt] (char) {#1};}}
\usepackage{enumitem}
\setlist[itemize]{leftmargin=1.2em, itemsep=0pt, topsep=2pt}
\newcommand{\bottleneck}[1]{\textbf{\textsc{B#1}}}

\hypersetup{
  colorlinks=true,
  linkcolor=blue,
  citecolor=blue,
  urlcolor=blue
}

\title{Scaling Full Conformal Image Classifiers}

\author{%
  Julio~Silva-Rodr\'iguez \\
  Computer Vision Lab, ETH Zürich \\
  \texttt{jusilva@ethz.ch} \\
  \And
  Ender~Konukoglu \\
  Computer Vision Lab, ETH Zürich \\
}

\begin{document}

% TITLE, ABSTRACT

\maketitle          

% ---- Abstract ----
\begin{abstract}

Conformal prediction provides set-valued predictions with distribution-free coverage guarantees, making it attractive for high-stakes image classification. However, split conformal prediction is data-inefficient, while full conformal prediction (FCP), despite its stronger statistical efficiency, is computationally prohibitive at scale because it requires candidate-specific model refits at test time. We address this limitation by leveraging zero-shot vision-language models (VLMs) to guide scalable FCP in large label spaces. We introduce Targeted Full Conformal Prediction (T-FCP), which uses a lightweight inductive conformal predictor to prune unlikely labels and applies FCP only to the remaining candidates, reducing computation while retaining the formal guarantee of the combined conformal procedure. We further propose Stabilized Online LDA (SO-LDA), an efficient VLM adaptation solver based on rank-one inverse-covariance updates. Across multiple benchmarks, including ImageNet, T-FCP enables practical full-conformal image classification with modest test-time overhead, yielding efficient prediction sets and more stable empirical coverage than split conformal alternatives. Code is available: \href{https://github.com/jusiro/T-FCP}{T-FCP}
\end{abstract}

% ---- Introduction ----
\section{Introduction}
\label{sec:intro}

In image classification, an input image $\mathbf{x} \in \mathcal{X}$ is mapped to a label space $\mathcal{Y} = \{1, \dots, C\}$ via a trained model $\pi_{\theta}: \mathcal{X} \rightarrow \Delta_C$, where $\Delta_C$ denotes the probability simplex, and $\theta$ the parameters. Predictions are typically turned into decisions by selecting the label with the highest score. However, such a procedure does not provide information about model certainty, so potential errors or uncertain cases might remain unnoticed. While modern neural networks are increasingly adopted in high-stakes computer vision applications, e.g., medical imaging~\cite{litjens2017survey} or autonomous driving~\cite{liang2022effective}, there is a growing need to provide users with trustworthy operational decisions with appropriate guarantees. 

\textbf{Conformal prediction} (CP)~\cite{vovk_book,learning_ny_transduction,transductive_ci} is a machine learning framework that creates set-valued predictions with formal, distribution-free coverage guarantees. In the classification setting, CP constructs a prediction set $\mathcal{C}(\mathbf{x}) \subseteq \mathcal{Y}$ through a set-valued mapping $\mathcal{C}: \mathcal{X} \rightarrow 2^{\mathcal{Y}}$, such that the following marginal coverage property holds:
\begin{align}
\label{eq:marginal}
    \mathbb{P}\big(Y \in \mathcal{C}(\mathbf{x})\big) \geq 1 - \alpha,
\end{align}
where $\alpha \in (0,1)$ is a user-specified error rate. This guarantee ensures, under mild exchangeability assumptions, that the true label lies within the prediction set with probability at least $1-\alpha$ under the target data distribution. Importantly, the size of $\mathcal{C}(\mathbf{x})$ adapts to the uncertainty of the prediction, such that inputs more ambiguous according to the model typically yield larger sets

The construction of $\mathcal{C}$ is data-driven, as it relies on a \textit{labeled calibration dataset}, $\mathcal{D}_N = \{(\mathbf{x}_i, y_i)\}_{i=1}^N$. Specifically, CP requires defining a nonconformity score function $\mathcal{S}: (\mathcal{X} \times \mathcal{Y}) \times \theta_{\mathcal{D}_{*}} \rightarrow \mathbb{R}$, where $\mathcal{S}((\mathbf{x}, y); \theta_{\mathcal{D}_{*}})$ quantifies how atypical a candidate label $y$ is for a given input $\mathbf{x}$ relative to a model trained on a reference dataset $\mathcal{D}_{*}$. CP uses the calibration data to determine a rejection threshold via an empirical quantile search on the observed score distribution $s_i = \mathcal{S}((\mathbf{x}_i, y_i); \theta_{\mathcal{D}_{*}})$ for $i = 1, \dots, N$:
\begin{align}
\label{eq:threshold}
    \hat{q}_{\alpha} = \text{Quantile}({\mathcal{D}_N}; \mathcal{S}; \alpha)
    = \inf \left\{
    s \in \mathbb{R} :
    \frac{1}{N} \sum_{i=1}^N \mathbf{1}\{ s_i \leq s \}
    \geq 
    \frac{\lceil (N+1)(1-\alpha) \rceil}{N}
    \right\},
\end{align}
which corresponds to a finite-sample corrected $(1-\alpha)$-quantile of the calibration nonconformity scores, and is used for a new test input, $\mathbf{x}_{N+1}$, for constructing the prediction set:
\begin{align}
    \mathcal{C}(\mathbf{x}_{N+1}) = \{ y \in \mathcal{Y} : \mathcal{S}((\mathbf{x}_{N+1}, y); \theta_{\mathcal{D}_{*}}) \leq \hat{q}_{\alpha} \}.
\end{align}
By definition, this set contains all labels whose nonconformity scores are sufficiently small, and therefore \textit{conform} to the calibration distribution.

\textbf{Designing valid scoring functions.}
Several choices of $\mathcal{S}$ exist in the literature, which in turn influence the structure of the resulting prediction sets, typically building upon predicted class probabilities. As an illustrative example, one may define
$\mathcal{S}(\mathbf{x}, y; \theta) = 1 - \pi_\theta(\mathbf{x})_y,$
which assigns small scores to labels with high predicted probability and larger scores to less likely alternatives. A key formal requirement for conformal validity, however, is that the resulting nonconformity scores are \textit{exchangeable} when evaluated on the calibration and test samples. Formally, consider the augmented dataset, $\mathcal{D}_{N+1} = \{(\mathbf{x}_i, y_i)\}_{i=1}^{N+1}$, assumed to be exchangeable. Let $s_i = \mathcal{S}(\mathbf{x}_i, y_i; \theta)$ denote the corresponding scores. Then, for any permutation $\sigma$ of $\{1, \dots, N+1\}$, it holds that
\begin{align}
    \label{score_exchangeability}
    (s_1, \dots, s_{N+1}) \overset{d}{=} (s_{\sigma(1)}, \dots, s_{\sigma(N+1)}),
\end{align}
i.e., the joint distribution of the scores is \textit{permutation-invariant}. This condition ensures that the test score is statistically indistinguishable from calibration scores, yielding the marginal coverage guarantee in Eq.~\eqref{eq:marginal}. In practice, \textit{Inductive Conformal Prediction} (ICP)~\cite{inductive_ci,vovk_book} enforces this requirement by learning the model parameters $\theta$ on data that is disjoint from the calibration set, thereby ensuring that the scores $\{s_i\}_{i=1}^{N+1}$ are computed using a fixed, calibration data-independent model. When the calibration set is obtained by splitting the available labeled data, this procedure is commonly referred to as \textit{Split Conformal Prediction} (SCP)~\cite{inductive_ci,vovk_book,shafer2008tutorial,pmlr-v25-vovk12}.

\textbf{The data-efficiency challenge.} Despite the SCP being the most common procedure for image classification, it is also inherently data-inefficient, as only a fraction of the available data contributes to training and to the empirical quantile estimation. This limitation is nontrivial: the marginal coverage guarantee holds only in expectation over the calibration samples, introducing finite-sample variability (i.e., a beta-binomial distribution on $\alpha$ and $N$~\cite{pmlr-v25-vovk12,hulsman2022distribution,marques2025universal}) that may lead to noticeable deviations from the target coverage when $N$ is small. For instance, \textit{a user specifying $\alpha = 0.1$ may observe empirical coverage closer to $85\%$ instead of $90\%$ in practice.}

\textit{Full Conformal Prediction} (FCP)~\cite{transductive_ci} is, in contrast, a more appealing alternative in terms of data efficiency, as it leverages the entire available calibration labeled data for both model fitting and quantile estimation without splits. The key idea is to maintain the score exchangeability requirement in Eq. \ref{score_exchangeability} by ensuring that the fitting function is symmetric with respect to the extended dataset, $\mathcal{D}_{N+1}$, which means that for all $(\mathbf{x}_i, y_i) \in \mathcal{D}_{N+1}$, and any permutation $\sigma$:
\begin{align}
    \label{symmetry}
    \mathcal{S}((\mathbf{x}_i, y_i); \theta_{\mathcal{D}_{N+1}}) = \mathcal{S}((\mathbf{x}_i, y_i); \theta_{\sigma(\mathcal{D}_{N+1})}).
\end{align}
Since $y_{N+1}$ is unknown, the FCP procedure requires testing the conformity of each new test sample $\mathbf{x}$ and candidate label $y \in \mathcal{Y}$. Each augmented dataset $\mathcal{D}_{N+1}^y = \{(\mathbf{x}_1, y_1), \dots, (\mathbf{x}_N, y_N), (\mathbf{x}_{N+1}, y)\}$ is used to fit the parameters of the scoring function, $\theta_{\mathcal{D}_{N+1}^y}$, which produces its respective nonconformity scores, $s^{y}_i = \mathcal{S}((\mathbf{x}_i, y_i); \theta_{\mathcal{D}_{N+1}^y})$. The (1-$\alpha$)-quantile using such score function is obtained from the same calibration set, $\hat{q}_{\alpha}^y$, and is used as a rejection criterion to construct valid conformal sets:
\begin{align}
    \mathcal{C}_{\text{FCP}}(\mathbf{x}_{N+1}) = \{ y \in \mathcal{Y} : \mathcal{S}((\mathbf{x}_{N+1}, y); \theta_{\mathcal{D}_{N+1}^y}) \leq \hat{q}_{\alpha}^y \}.
\end{align}
Despite being more data-efficient, the FCP procedure creates two computational bottlenecks at test time: (\bottleneck{1}) testing each potential label for an incoming test point, and (\bottleneck{2}) the cost of training each classifier during testing. Consequently, especially considering standard large-scale computer vision benchmarks, e.g., ImageNet, where $C=1000$, FCP has been systematically neglected in favor of SCP, at the cost of assuming larger annotation efforts or wider coverage distributions. 

\textbf{Contributions.} In this work, we aim to enable the construction of full conformal image classifiers at scale. Our main observation is that in a large label space, despite not being intrinsically ordered, many classes may be weakly related to each other, e.g., “husky” and “toaster”, while some groups of categories might be more similar, e.g., “husky”, “malamute”, “Samoyed”, or “Greater Swiss Mountain dog” (\textit{real examples from ImageNet}). Therefore, if an initial proxy were available to guide the FCP procedure toward the most likely labels for a new test image, the total runtime could be significantly reduced. Hence, we propose exploiting the open-vocabulary capabilities of pre-trained vision-language models (VLMs), such as CLIP~\cite{radford2021learning}. These provide zero-shot class prototypes via text descriptions for the target label domain, which, despite not being specialized in the downstream task, serve as proxies in the model's embedding space to discard non-similar categories that are not considered in the FCP procedure for each test image. Technically, our contributions are:
\vspace{-1mm}
\begin{itemize}
    \item We formalize \textit{Targeted Full Conformal Prediction} (T-FCP), a procedure that only requires running FCP on a subset of the label space (\bottleneck{1}), defined by an ICP procedure guided by zero-shot class prototypes with provable coverage guarantees without requiring splits on calibration data.
    \item To address the cost of multiple model fits (\bottleneck{2}), we introduce SO-LDA, an online Linear Discriminant Analysis solver for VLM adaptation that performs stable incremental class-prototype updates together with rank-one inverse-covariance updates. SO-LDA achieves performance close to expensive gradient-based solvers while being orders of magnitude faster.
\end{itemize}
\vspace{-1mm}
Extensive experiments demonstrate that our framework enables full conformal procedures on large-scale datasets with a modest latency overhead (e.g., $\sim30 \,\text{ms}$/image for ImageNet), yielding efficient sets and more stable coverage distributions than split conformal procedures.

% ---- Related work ----
\section{Related Work}
\label{sec:rw}

\textbf{Full conformal prediction.} Recent work on full conformal prediction has focused on reducing its computational burden, e.g., by limiting the candidate label space evaluated at test time. In regression, this has been achieved by exploiting the natural ordering of the response space, either through discretization~\cite{lei2013distribution,Lei2018} or homotopy/continuation methods that track the solution path of the fitted model as the candidate response varies~\cite{Lei2018,lei2019fast,guha2023conformalization,li2024generalized}. Classification problems, however, do not directly admit these strategies. Existing work for classification has instead focused on online classifiers with decoupled test-time updates, including exact permutation-invariant updates for kNN and SVMs~\cite{cherubin2021exact}, as well as inexact updates for gradient-based solvers via influence functions~\cite{martinez2023approximating}.

\textbf{Conformal prediction for image classification.}
Most neural-network-based image classification works follow SCP~\cite{einbinder2022training,conftr,raps,ding2024class,saps,pmlr-v258-campos25a,ding2026conformal}, improving either training objectives for set efficiency~\cite{einbinder2022training,conftr} or nonconformity scores for adaptiveness, class-conditional coverage, and robustness~\cite{raps,pmlr-v258-campos25a,ding2024class,ding2026conformal}. In contrast, calibration-data efficiency and empirical coverage variability have received less attention, as these methods typically rely on in-domain training data and separate calibration splits.

\textbf{Conformal prediction for zero-shot models.} Recent works have explored split conformal prediction for vision foundation models~\cite{fillioux2024foundation} and, more specifically, vision-language models (VLMs)~\cite{confot25}. Several methods improve VLM adaptation within conformal procedures using unsupervised transductive adaptation~\cite{confot25,scat,lata}. Closer to our work,~\cite{fca25} explored full conformal classifiers using closed-form linear solvers in the VLM embedding space. However,~\cite{fca25} focused on small-scale datasets ($C<50$). As we show later, their proposed kNN-style solver, SS-Text, does not scale well to large label spaces and fine-grained classification tasks.

\textbf{Combining conformal predictors.} Related to our work are methods that combine or cascade multiple split conformal procedures~\cite{fisch2021efficient,timans2024adaptive,SanSar_Conformal_MICCAI2025}. For example, conformal cascades progressively prune candidate sets obtained with different nonconformity scores~\cite{fisch2021efficient}. Other works derive combined coverage guarantees, either through multiplicative bounds under independence assumptions~\cite{timans2024adaptive}, or by allocating individual error rates using common corrections such as Bonferroni~\cite{timans2024adaptive,SanSar_Conformal_MICCAI2025}. These approaches have been used to achieve goals such as label-dependent guarantees for object detection~\cite{timans2024adaptive} and joint guarantees across multiple outputs~\cite{SanSar_Conformal_MICCAI2025}.

% ---- Methods ----
\section{Methods}
\label{sec:methods}

\subsection{Background on zero-shot image classifiers}

Let $f_{\theta}: \mathcal{X} \rightarrow \mathbb{R}^F$ denote a visual encoder that maps an input image $\mathbf{x}$ to a feature vector $\mathbf{v} = f_{\theta}(\mathbf{x}) \in \mathbb{R}^F$. We consider contrastive vision-language models (VLMs), following CLIP~\cite{radford2021learning}, as our \emph{zero-shot models}. These provide class prototypes from class descriptions (e.g., ``an image of a \texttt{[class name]}''), via its text encoder $g_{\phi}$, producing textual embeddings $\mathbf{t}_y = g_{\phi}(\text{prompt}_y) \in \mathbb{R}^F$. These embeddings can be directly used as class prototypes, i.e., $\mathbf{W}^0 = (\mathbf{t}_y)_{y=1}^C$, by measuring the similarity between $\ell_2$-normalized image embeddings and class prototypes, yielding class probabilities:
\begin{align}
\label{eq:probs}
    \pi_{\mathbf{W}^0}(\mathbf{x}_i) = (p^0_{i,y})_{y=1}^C \in \Delta_C, 
    \quad
    p^0_{i,y} = \frac{\exp\big(\mathbf{v}_i^\top \mathbf{w}^0_y/\tau \big)}
    {\sum_{j \in \mathcal{Y}} \exp\big(\mathbf{v}_i^\top \mathbf{w}^0_j/\tau \big)},
\end{align}

\subsection{Targeted Full Conformal Prediction}
\label{sec:tfcp}
First, we aim to leverage zero-shot models to address the first bottleneck in full conformal prediction procedures (\bottleneck{1}): the need to iterate over the entire label space for each incoming test sample.

\textbf{Combining conformal procedures.}
Our approach is based on the observation that multiple valid conformal predictors can be combined while maintaining statistical guarantees. Let $\mathcal{C}_1$ and $\mathcal{C}_2$ be two conformal predictors constructed using error levels $\alpha_1$ and $\alpha_2$, respectively. That is, for a test point $(\mathbf{x}, Y)$, and under the standard exchangeability assumptions:
\begin{align}
\label{alpha1alpha2}
    \mathbb{P}(Y \in \mathcal{C}_1(\mathbf{x})) \geq 1 - \alpha_1,
    \quad
    \mathbb{P}(Y \in \mathcal{C}_2(\mathbf{x})) \geq 1 - \alpha_2.
\end{align}
We define a combined prediction set as the intersection
\begin{align}
    \label{combination}
    \mathcal{C}_{1\cap2}(\mathbf{x}) = \mathcal{C}_1(\mathbf{x}) \cap \mathcal{C}_2(\mathbf{x}).
\end{align}
\begin{proposition}
\label{prop:intersection}
The prediction set $\mathcal{C}_{1\cap2}(\mathbf{x})$ satisfies
\begin{align}
    \mathbb{P}\big(Y \in \mathcal{C}_{1\cap2}(\mathbf{x})\big) \geq 1 - (\alpha_1 + \alpha_2).
\end{align}
\end{proposition}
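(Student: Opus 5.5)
The plan is a union bound applied to the two miscoverage events, with both marginal guarantees in Eq.~\eqref{alpha1alpha2} taken over one common probability space. The probability $\mathbb{P}$ refers to the joint randomness of the calibration data and the test pair $(\mathbf{x}, Y)$. Both $\mathcal{C}_1$ and $\mathcal{C}_2$ are built from the same exchangeable sample $\mathcal{D}_{N+1}$, possibly with different scores, models, or fitting procedures, and evaluated at the same test point. So I will first fix this space and check that each guarantee in Eq.~\eqref{alpha1alpha2} is a statement about an event on it.

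Next, define the miscoverage events $E_1 = \{Y \notin \mathcal{C}_1(\mathbf{x})\}$ and $E_2 = \{Y \notin \mathcal{C}_2(\mathbf{x})\}$. By the definition of the intersection in Eq.~\eqref{combination}, the label fails to be in $\mathcal{C}_{1\cap 2}(\mathbf{x})$ exactly when it is missing from at least one of the two sets. This gives the set identity $\{Y \notin \mathcal{C}_{1\cap2}(\mathbf{x})\} = E_1 \cup E_2$.

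Boole's inequality then gives $\mathbb{P}(E_1 \cup E_2) \le \mathbb{P}(E_1) + \mathbb{P}(E_2) \le \alpha_1 + \alpha_2$, where the second inequality is Eq.~\eqref{alpha1alpha2} rewritten in complement form. Taking complements yields $\mathbb{P}(Y \in \mathcal{C}_{1\cap2}(\mathbf{x})) \ge 1 - (\alpha_1 + \alpha_2)$. No independence between the two predictors is needed, which is why the bound is additive rather than multiplicative as in \cite{timans2024adaptive}.

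The only real obstacle is this bookkeeping point: both guarantees must be marginal guarantees with respect to the same joint law. This matters when one predictor is inductive with a fixed zero-shot model and the other is full conformal, refit on $\mathcal{D}^y_{N+1}$. I would note that each guarantee holds marginally over the same exchangeable draw. Since the union bound allows arbitrary dependence between $E_1$ and $E_2$, nothing further about their joint behaviour needs to be checked.
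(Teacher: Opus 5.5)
Your proposal is correct and follows the paper's proof: you rewrite miscoverage of the intersection as the union of the two miscoverage events $E_1 \cup E_2$ (De Morgan), apply the union bound, and then use the two marginal guarantees in complement form. Your added remark that both guarantees must hold under the same joint law of calibration and test data is a sound bookkeeping point that the paper leaves implicit.
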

The proof follows directly from De Morgan's law and the union bound, and is given in Appendix~\ref{app_proof}.
\begin{remark}[Tightness and conservativeness]
\label{remark:tightness}
The bound in Proposition~\ref{prop:intersection} is a worst-case guarantee and may be conservative in practice. It becomes tight when the two miscoverage events are disjoint and have probabilities \(\alpha_1\) and \(\alpha_2\). When the error events overlap, the intersection can have coverage higher than \(1-(\alpha_1+\alpha_2)\). If the individual miscoverage probabilities are exactly \(\alpha_1\) and \(\alpha_2\), the coverage of the intersection lies between \(1-(\alpha_1+\alpha_2)\) and \(1-\max\{\alpha_1,\alpha_2\}\).
\end{remark}

\textbf{Targeted conformal prediction via zero-shot models.} Let us now introduce a labeled calibration set consisting of embedding representations, $\mathcal{D}_N = \{(\mathbf{v}_i, y_i)\}_{i=1}^N$. A set of zero-shot linear weights, $\mathbf{W}^0$, is presented, and the goal is to create conformal sets with coverage guarantees from new incoming test images, using their embedding, $\mathbf{v}_{N+1}$. Let us now define an ICP procedure, $\mathcal{C}_{\text{ICP}}$, whose nonconformity score function leverages the zero-shot weights, $\mathcal{S}(\mathbf{x}, y; \mathbf{W}^{0})$, and produces conformal sets with an error rate of $\alpha_{\text{ICP}}$ using the whole calibration set. Also, let us denote an FCP procedure, $\mathcal{C}_{\text{FCP}}$, that uses a more expensive scoring function by fitting the classifier on the extended dataset $\mathcal{S}((\mathbf{x}, y); \mathbf{W}^{\mathcal{D}_{N+1}^y})$, operating at an error level $\alpha_{\text{FCP}}$. We define the \textit{Targeted Full Conformal Prediction} (T-FCP) sets as the intersection of both procedures, i.e., $\mathcal{C}_{\text{T-FCP}}=\mathcal{C}_{\text{ICP}\cap\text{FCP}}$, such that, according to Proposition~\ref{prop:intersection}, the following marginal coverage property holds:
\begin{align}
    \mathbb{P}\big(Y \in \mathcal{C}_{\text{T-FCP}}(\mathbf{x})\big) \geq 1 - (\alpha_{ICP} + \alpha_{FCP}).
\end{align}
By construction, any label excluded by $\mathcal{C}_{\text{ICP}}$ for a test sample cannot appear in $\mathcal{C}_{\text{T-FCP}}$. Therefore, we can use $\mathcal{C}_{\text{ICP}}$ as a \textit{pruning stage} of highly unlikely labels to be included in the conformal set, and only run the expensive FCP procedure, $\mathcal{C}_{\text{FCP}}$, on the surviving, i.e., \textit{target labels}, while maintaining theoretical guarantees. Specifically, for a given target error rate $\alpha$, we can set $\alpha_{FCP} = \alpha - \alpha_{ICP}$. The parameter $\alpha_{\text{ICP}}$ controls a computational-statistical tradeoff: larger values typically yield smaller ICP candidate sets and therefore lower FCP cost, but allocate less error budget to the full conformal stage. In our experiments, we find that \(\alpha_{\mathrm{ICP}}=0.5\%\) already provides substantial computational efficiency gains while keeping the additional conservativeness limited in practice.

\subsection{SO-LDA: Stabilized Online LDA}
\label{sec:lda}

Second, we exploit the capabilities of zero-shot models to alleviate the second bottleneck in full conformal prediction (\bottleneck{2}): the computational cost of fitting the classifier for each extended dataset. 

\textbf{Transfer learning.} Instead of training a deep network for each extended dataset, we leverage the rich embedding representation of pre-trained vision-language models, similarly to~\cite{fca25}. Given a dataset with labeled visual examples, and its corresponding embeddings, $\mathcal{D}_N = \{(\mathbf{v}_i, y_i)\}_{i=1}^N$, a common strategy is to learn task-specific linear class prototypes $\mathbf{W}^{\mathcal{D}_N}$, that replace its zero-shot counterpart in Eq.~\ref{eq:probs} to produce softmax class scores. 

\textbf{Online LDA.} While several linear probe solvers have been proposed in the literature~\cite{gao2021clip,clap24,wang2024a}, gradient-descent-based minimization of softmax cross-entropy is generally the best-performing solution~\cite{lin2023multimodality,clap24}. However, its computational cost ($\mathcal{O}(T NFC)$, with $T$ the number of iterations) makes it infeasible for full conformal prediction. We therefore turn our attention to training-free solvers~\cite{zhang2021tip,wang2024a,fca25,sstext25}, particularly those that enable decoupled updates between calibration samples and new incoming test points. Specifically, we build upon the Linear Discriminant Analysis (LDA) solver~\cite{lda}, proposed in~\cite{wang2024a} for VLMs, which models class-conditional feature distributions as Gaussian with a shared covariance matrix. In its simplified form, by dropping the bias term for consistency with Eq.\ref{eq:probs}, the classifier weights are given by:
\begin{align}
\label{lda_classweight}
\mathbf{w}_y = \mathbf{\Sigma}^{-1} \mathbf{\mu}_y
\end{align}
where $\mu_{y}$ are the class centers, and $\Sigma^{-1}$ the inverse covariance matrix. 

Given a labeled set $\mathcal{D}_{N}$, and let $N_y=\sum_{i\in\mathcal{D}_N}\mathbf{1}\{ y_i = y \}$ denote the number of samples for each category, then the model parameters can be estimated as follows:
\begin{align}
{\mathbf{\mu}_y}_{(N)} = \frac{1}{N_{y}}\sum_{i\in\mathcal{D}_N} \mathbf{1}\{ y_i = y \}\mathbf{v}_i, \quad 
\mathbf{\Sigma}^{-1}_{(N)} = \mathbf{S}_{(N)}^{-1} = (\frac{1}{N}\sum_{i\in\mathcal{D}_N} \mathbf{z}_i \mathbf{z}_i^{\top} )^{-1},
\end{align}
where $\mathbf{z}_i=\mathbf{v}_i-{\mathbf{\mu}_{y=y_i}}_{(N)}$ are the class-centered features. To avoid numerical instabilities, we use diagonal-loading stabilization, $\mathbf{S}^{\text{reg}}_{(N)}=\mathbf{S}_{(N)}
+\lambda_{\mathrm{REG}}\operatorname{Diag}(\mathbf{S}_{(N)})$, following the common use of shrinkage for high-dimensional precision estimation~\cite{kubokawa2008estimation,wang2024a}. The loading term is estimated once from the calibration covariance and kept fixed during candidate augmentations, preserving efficient rank-one updates. Appendix~\ref{app_solda} compares it with fixed-ridge and full recomputation variants.

Given a new incoming sample, ($\mathbf{v}_{N+1}, y_{N+1}$), the model parameters can be updated~\cite{kuncheva2008adaptive}, instead of being recalculated again. Particularly, for the covariance matrix updates, we can consider the Sherman–Morrison formula~\cite{hager1989updating} to compute the inverse of its rank-one update efficiently:
\begin{align}
{\mathbf{\mu}_y}_{(N+1)}
&=
\frac{
N_{y}\cdot{\mathbf{\mu}_y}_{(N)}
+
\mathbf{v}_{N+1}\mathbf{1}\{ y_{N+1} = y \}
}{
N_{y} + \mathbf{1}\{ y_{N+1} = y \}
},
\\
\mathbf{\Sigma}^{-1}_{(N+1)}
&=
\frac{N+1}{N}
\left(
\mathbf{\Sigma}^{-1}_{(N)}
-
\frac{
\mathbf{\Sigma}^{-1}_{(N)}
\mathbf{z}_{N+1}\mathbf{z}_{N+1}^{\top}
\mathbf{\Sigma}^{-1}_{(N)}
}{
N+\mathbf{z}_{N+1}^{\top}
\mathbf{\Sigma}^{-1}_{(N)}
\mathbf{z}_{N+1}
}
\right).
\label{lda_updates}
\end{align}
In full conformal procedures, the parameters are first estimated from the calibration data and then updated for each augmented dataset $\mathcal{D}_{N+1}^y$. This reduces the per-label update cost to $\mathcal{O}(F^2)$, avoiding repeated matrix inversion ($\mathcal{O}(F^3)$) and enabling scalable conformal inference compared to recomputing the LDA solution from scratch ($\mathcal{O}(NF^2 + F^3)$).

\textbf{Enabling stable online updates through zero-shot prototypes.} A practical difficulty of online LDA in the full conformal setting is that the residual vectors $\mathbf{z}_i$ for samples in $\mathcal{D}_N$ are computed using statistics estimated before the candidate test sample is observed, whereas the test sample itself is evaluated under updated statistics. This creates an asymmetry between the approximate online update (Eq.~\ref{lda_updates}) and a full refit on the augmented dataset. To mitigate this issue, we introduce \textbf{S}tabilized \textbf{O}nline \textbf{LDA} (\textbf{SO-LDA}), including the following adjustments:\\
\begin{minipage}{0.48\linewidth}
\begin{align}
\mathbf{z}_i 
= 
\mathbf{v}_i - \mathbf{W}_{y_i}^{(0)}
\label{eq:so_residual},
\end{align}
\end{minipage}
\hfill
\begin{minipage}{0.48\linewidth}
\begin{align}
\mathbf{\mu}_y^{\text{SO-LDA}}
=
\mathbf{\mu}_y^{(N+1)}
+
\lambda_{\text{TEXT}} \mathbf{W}_y^{(0)}
\label{eq:so_mean}.
\end{align}
\end{minipage}

According to Eq.~\ref{eq:so_residual}, the feature-centering anchors used for covariance estimation no longer depend on the calibration samples, but on the zero-shot prototypes. Therefore, the residual centering rule remains invariant under candidate augmentation and avoids the asymmetry induced by recomputing visual class centers differently for calibration and test samples. Additionally, following standard practices in VLM adaptation~\cite{clap24,sstext25,fca25}, Eq.~\ref{eq:so_mean} introduces a text-guided prior (weighted by $\lambda_{\text{TEXT}}$) on the updated class means by biasing them toward the textual prototypes before $\ell_2$-normalization and score prediction in Eq.~\ref{lda_classweight}. This improves stability, especially in low-data regimes.

% ---- Experiments ----
\section{Experiments}
\label{sec:experiments}

\subsection{Setup}
\label{ssec:setup}

\textbf{Datasets.} We evaluate the proposed procedure in the standard 11 datasets used for CLIP zero- and few-shot transfer evaluation~\cite{clap24,wang2024a}. These are: ImageNet~\cite{deng2009imagenet}, SUN397~\cite{sun397}, FGVCAircraft~\cite{aircraft}, EuroSAT~\cite{eurosat}, StanfordCars~\cite{stanfordcars}, Food101~\cite{food101}, OxfordPets~\cite{oxfordpets}, Flowers102~\cite{flowers102}, Caltech101~\cite{caltech}, DTD~\cite{dtd}, and UCF101~\cite{ucf101}. These gather a heterogeneous set of general, fine-grained, and specialized domain classification tasks, several of which have hundreds of categories. We refer to Appendix~\ref{app_datasets} for specific details on the number of categories and tasks. 

\textbf{Calibration sets.} The corresponding test partition from each dataset is used in our conformal experiments to produce disjoint calibration and test subsets. Specifically, samples of size $N=C\times K$, with $K$ the so-called number of shots in the transfer learning literature~\cite{zhou2022coop,gao2021clip,clap24}, are retrieved for calibration. By default, we set $K=16$, a standard data regime in transfer learning from VLMs~\cite{zhou2022coop,gao2021clip,clap24}, and present ablation studies with smaller sample sizes. Note that, in contrast to the standard literature on transfer learning from VLMs, which samples label-balanced adaptation sets, we follow the practice in conformal prediction~\cite{confot25,scat} that maintains the dataset's label-marginal distribution in the calibration data to avoid breaking exchangeability assumptions. All experiments are repeated 50 times using different random seeds for sampling the calibration data.

\textbf{Zero-shot models.} CLIP \cite{radford2021learning}, specifically ViT-B/16 backbone, is used as a zero-shot model to evaluate the proposed conformal procedure. We also present generalization studies to other CLIP and MetaCLIP~\cite{xu2024demystifying} backbones. The text encoder from each model is used to produce zero-shot class-wise prototypes for each downstream category by using standard templates and category names \cite{zhou2022coop,gao2021clip,clap24}, specifically the same prompts as in~\cite{confot25}. 

\textbf{Implementation details.} Conformal sets are created at error rates $\alpha\in\{0.10, 0.05\}$. For T-FCP, we set $\alpha_{\text{ICP}}=0.5\%$. For SO-LDA, the hyperparameters are fixed to $\lambda_{\text{TEXT}}=1$ and $\lambda_{\text{REG}}=10$.

\textbf{Baselines.} First, we compare our conformal procedure with standard ones in the literature:
\begin{itemize}
    \item Inductive conformal prediction (ICP): uses the whole calibration sample for quantile search, and computes the nonconformity score using the zero-shot scores (ZS) without adaptation.
    \item Split conformal prediction (SCP): splits the calibration data into two equally-sized subsets, one for adaptation and the other for conformal quantile search. Since SCP is inductive, there is no computational restriction during the adaptation stage; therefore, we used standard softmax cross-entropy minimization via iterative gradient descent (GD). Training details are in Appendix~\ref{app_baselines}.
    \item Full conformal prediction (FCP): performs the full conformal procedure over the whole label space. Adaptation is performed using our proposed SO-LDA solver.
\end{itemize}
Second, we include ICP procedures that perform unsupervised transductive adaptation (ICP-T), recently proposed for VLMs~\cite{confot25, scat,lata}. Training details are in Appendix~\ref{app_baselines}.

\textbf{Nonconformity scores.}
We use LAC~\cite{lac} as the default score, as it is low-latency and commonly used in FCP~\cite{cherubin2021exact,martinez2023approximating}. APS~\cite{aps} is explored in Appendix~\ref{results_aps}. More adaptive scores, such as RAPS~\cite{raps} or ClusterCP~\cite{ding2024class}, require sorting or additional hyperparameters, which introduces extra computation and, in some cases, additional data splits. Since our focus is on data-efficient and scalable FCP, we leave the integration of adaptive nonconformity scores into large-scale FCP to future work.

\textbf{Metrics.}
We report accuracy and standard conformal metrics. Empirical coverage is characterized by its average and deviation (\(2\sigma\)) across seeds, together with the percentage of runs achieving coverage at least \(1-\alpha-0.005\) (\%Valid), being $0.005$ a tolerance. Set efficiency is measured by the mean and median set size, and by the percentage of singleton predictions (\%Sing.), as in~\cite{pmlr-v258-campos25a}.

\subsection{Main results}
\label{ssec:results}

\textbf{Standard conformal procedures.} Table~\ref{tab:conformal_results} provides the results obtained with the different conformal procedures, and Figure~\ref{fig:cov_per_dataset}~(a) illustrates the empirical coverage distribution for the main conformal alternatives. \textbf{\circled{1}}~In terms of \textbf{coverage}, it is worth noting that, despite all procedures achieving the target average coverage, SCP's coverage deviation across samples is $20\%$ relatively larger than ICP's, due to the data split. This is not the case for FCP procedures, which provide narrower coverage distributions, as in ICP. Notably, T-FCP achieves the largest proportion of experiments with valid coverage among these procedures: $81.8\%$ and $88.7\%$ for $\alpha=0.10$ and $\alpha=0.05$, respectively. This is explained by the theoretically expected slightly over-coverage produced by the union-bound error rate design, i.e., $\alpha_{\text{ICP}}$, for early label pruning. \textbf{\circled{2}}~Regarding \textbf{set efficiency}, ICP produces the largest set sizes, since it does not benefit from transfer learning with labeled data. SCP and FCP provide similar mean set sizes. However, the median indicates that FCP produces set sizes that are $\sim 10\%$ smaller (more efficient) than SCP, while its number of singletons (better on the discriminative aspect) is larger by a considerable margin. We explain these mixed observations in Figure~\ref{tab:conformal_results}~(b), showing that FCP using SO-LDA presents longer tails in the set size distribution, i.e., larger sets on uncertain inputs. This is explained by the design of the FCP, which inflates the conformity of each image-label candidate by training in the extended dataset, and the differences in optimization objectives between solvers. Finally, T-FCP provides slightly more conservative (larger) sets than FCP, with the advantage of enabling its real-time application on large-scale datasets, as discussed later. Still, T-FCP improves the median set size and singleton rate relative to SCP.

\begin{table}[h!]
\setlength{\tabcolsep}{2.6pt}  % default is ~6pt
\centering
\caption{Comparison of conformal procedures atop CLIP ViT-B/16 using $N=C\times16$. Results are averaged across 11 datasets, and per-dataset results are in Appendix \ref{app_detailedresults}.}
\vspace{-1mm}
\label{tab:conformal_results}
\small
\begin{tabular}{lll | c | ccc ccc | ccc ccc}
\toprule
\multicolumn{4}{c}{}
& \multicolumn{3}{c}{Cov. ($\alpha = 0.10$)} 
& \multicolumn{3}{c}{Set size}
& \multicolumn{3}{c}{Cov. ($\alpha = 0.05$)} 
& \multicolumn{3}{c}{Set size} \\
\cmidrule(lr){5-7} \cmidrule(lr){8-10}
\cmidrule(lr){11-13} \cmidrule(lr){14-16}
& & & Acc. 
& Avg. & $2\sigma$ & \%Valid 
& Mean & Med. & \%Sing. 
& Avg. & $2\sigma$ & \%Valid 
& Mean & Med. & \%Sing. \\
\midrule
ICP             & ZS                  & & 65.7 & 90.1 & 2.0 & 77.8 & 4.8 & 4.5 & 37.3 & 95.2 & 1.5 & 84.0 & 7.9 & 7.3 & 28.2 \\
ICP-T           & OT~\cite{confot25}  & & 68.6 & 90.6 & 2.0 & 82.3 & 3.9 & 3.7 & 42.4 & 95.4 & 1.4 & 86.8 & 6.1 & 5.6 & 32.6 \\
ICP-T           & TIM~\cite{scat}     & & 71.5 & 90.6 & 2.0 & 80.5 & 3.6 & 3.5 & 45.2 & 95.4 & 1.4 & 87.2 & 5.6 & 5.3 & 34.3 \\
\midrule
SCP             & GD                  & & 79.9 & 90.7 & 2.7 & 77.1 & 2.2 & 2.1 & 64.0 & 95.5 & 1.9 & 79.5 & 3.3 & 3.0 & 51.9 \\
\midrule
\textbf{FCP}             & \textbf{SO-LDA}               & & 80.3 & 90.1 & 2.2 & 77.6 & 2.2 & 1.8 & 69.3 & 95.2 & 1.6 & 80.5 & 3.3 & 2.6 & 57.4 \\ 
\textbf{T-FCP}           & \textbf{SO-LDA}               & & 80.3 & 90.7 & 2.1 & 81.8 & 2.3 & 1.9 & 68.1 & 95.6 & 1.5 & 88.7 & 3.5 & 2.7 & 56.0 \\
\bottomrule
\end{tabular}
\end{table}

\begin{figure*}[h!]
    \begin{center}
        \setlength{\tabcolsep}{1.4pt}
        \begin{tabular}{cccc}
        
         \includegraphics[width=.24\linewidth]{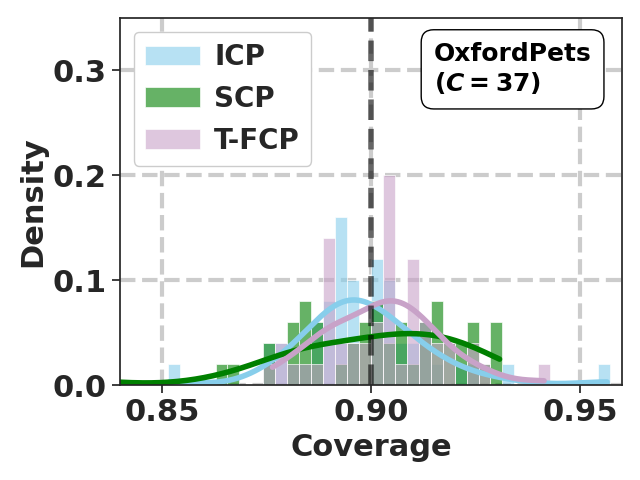} &
         \includegraphics[width=.24\linewidth]{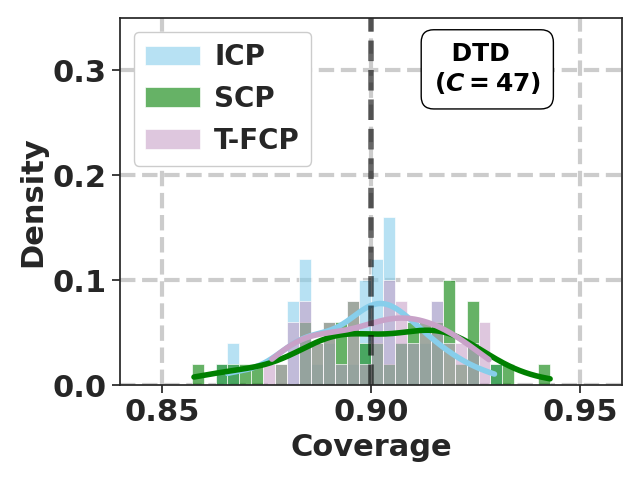} &
         \includegraphics[width=.24\linewidth]{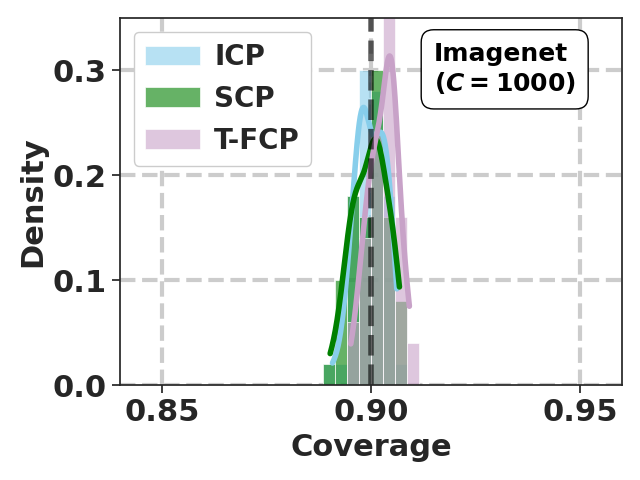} &
        \includegraphics[width=0.24\linewidth]{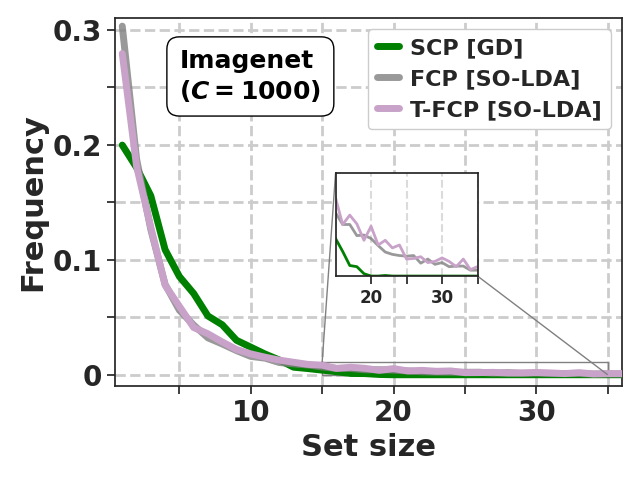}  \\ 

        \multicolumn{3}{c}{\small{(a) Coverage empirical distribution.}}  &
        \small{(b) Set size distribution.} \\

        \end{tabular}
        \caption{Analysis on empirical coverage and set size distribution on representative datasets. Results using CLIP ViT-B/16 as backbone, $N=C\times16$ samples for calibration.}
        \label{fig:cov_per_dataset}
    \end{center}
\end{figure*}

\textbf{Comparison against transductive adaptation.} Recently explored unsupervised transductive solvers (ICP-T in Table~\ref{tab:conformal_results}) improve over vanilla ICP in accuracy and set efficiency while maintaining the same calibration sample and therefore similar dispersion in coverage distribution. However, their unsupervised adaptation underperforms SCP in accuracy and therefore yields larger conformal sets.

\textbf{Computational efficiency.} Figure~\ref{fig:latency}(a) showcases the latency scaling, in images per second, for full conformal procedures with the number of categories. Our experiments estimate that running the full conformal procedure using the iterative GD solver on ImageNet would require $\sim500\,\text{s}$ per image, which is infeasible, as generally assumed in the literature. By using our SO-LDA solver, such latency is dramatically reduced to $\sim0.33 \,\text{s}$, and is further decreased to $\sim30 \,\text{ms}$ when pruning unlikely labels via our proposed T-FCP procedure. Regarding GPU usage, we parallelized the number of tested labels in the FCP procedure to $100$, which maintains, as depicted in Figure~\ref{fig:latency}(b), a top-up peak-GPU memory consumption of $\sim16\,\text{Gb}$ on the largest dataset, i.e., ImageNet. However, as shown in  Figure~\ref{fig:latency}(b), such a limit can be decreased without sacrificing much latency.

\begin{figure}[t]
\centering

\begin{minipage}{0.48\textwidth}
  \centering
    \includegraphics[width=0.90\linewidth]{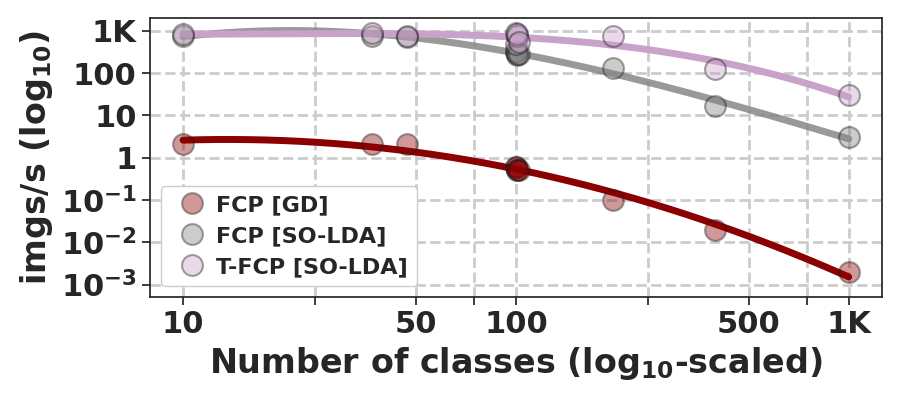} 
\end{minipage}
\hfill
\begin{minipage}{0.48\textwidth}
  \centering
  {\scriptsize
  \begin{tabular}{c r r r}
  
    \toprule
    \multicolumn{1}{c}{\textbf{Dataset}} & \multicolumn{1}{c}{\textbf{C / Iter}} & \multicolumn{1}{c}{\textbf{P-GPU}} & \multicolumn{1}{c}{\textbf{imgs / s}} \\
    \midrule

    \multirow{2}{*}{DTD}
     & 25  & 106.3 MB & 393 \\
     \multirow{2}{*}{($C=37$)} & 50  & 181.0 MB & 745 \\
     & 100 & 181.1 MB & 746 \\
    
     \midrule
    \multirow{2}{*}{ImageNet}
     & 25  &  4.1 GB  & 27.7 \\
     \multirow{2}{*}{($C=1$K)} & 50  &  8.2 GB  & 29.4 \\
     & 100 & 16.2 GB  & 30.3 \\
    
    \bottomrule

  \end{tabular} 
  }
\end{minipage} \\[2ex]
\begin{minipage}{0.48\textwidth}
\centering
\small{(a) Latency scaling with number of classes.}
\end{minipage}
\begin{minipage}{0.48\textwidth}
\centering
\small{(b) Label parallelization (\textbf{C/Iter}) in T-FCP.}
\end{minipage}

\caption{Computational efficiency analysis. Results using ViT-B/16, $N=C\times16$ and $\alpha=0.05$. All our experiments were conducted on a single NVIDIA A100-PCIE-40GB.}
\label{fig:latency}
\end{figure}

\subsection{In-depth studies}
\label{ssec:analysis}

\textbf{Label pruning stage.} Figure~\ref{fig:alpha_icp}~(a) provides a visualization of the pruning capabilities of the inductive stage of our T-FCP procedure across datasets. Figure~\ref{fig:alpha_icp}~(a, \textit{left}) shows that most categories are discarded with small $\alpha_{\text{ICP}}$, specifically the elbow is found near $1\%$ of the allowed error rate. Therefore, most computational efficiency gains can be achieved without sacrificing much of the error-rate budget for the full conformal stage. Figure~\ref{fig:alpha_icp}~(a, \textit{center}) shows a correlation between the zero-shot performance and the pruning capabilities; thus, T-FCP will be more computationally efficient the stronger the zero-shot model initially is. However, upon a closer look to Figure~\ref{fig:alpha_icp}~(a, \textit{center} and \textit{right}), large-scale datasets, such as SUN397 (\tikz[baseline=-0.5ex]\draw[fill=gray, opacity=0.8] (0,0) circle (0.8ex);) or ImageNet (\tikz[baseline=-0.5ex]\draw[fill=pink, opacity=0.8] (0,0) circle (0.8ex);), present larger pruning ratios than their zero-shot accuracy (nearly $60-70\%$) would imply. These results suggest that our T-FCP procedure is especially effective on the largest-scale datasets, where inter-class heterogeneities are greater and therefore categories with unrelated visual content are more easily pruned.

\textbf{Conservativeness and computational feasibility trade-off.} In Figure~\ref{fig:alpha_icp}~(b) we explore the effect of pruning at different $\alpha_{\text{ICP}}$ on latency gains, as well as the set efficiency loss. As theoretically expected, larger $\alpha_{\text{ICP}}$ provide faster runtimes in the FCP stage according to Figure~\ref{fig:alpha_icp}~(b, \textit{left}). In terms of set efficiency, i.e, conservativeness, the produced set distribution is bounded between the ICP and FCP, as shown in Figure~\ref{fig:alpha_icp}~(b, \textit{right}), providing a user with control over the computational feasibility and set efficiency sacrifice via $\alpha_{\text{ICP}}$. If the latter is a priority, then T-FCP provides gains over SCP only when using $\alpha_{\text{ICP}}<2.5\%$. On the other hand, if coverage consistency is the main desideratum, all T-FCP configurations would yield a smaller variance in the finite-sample coverage distribution, similar to ICP, and therefore the decision on how to set $\alpha_{\text{ICP}}$ is constrained only by the available latency margin.

\begin{figure*}[h!]
    \begin{center}
        \setlength{\tabcolsep}{1.8pt}
        \begin{tabular}{ccccc}

         \includegraphics[width=.19\linewidth]{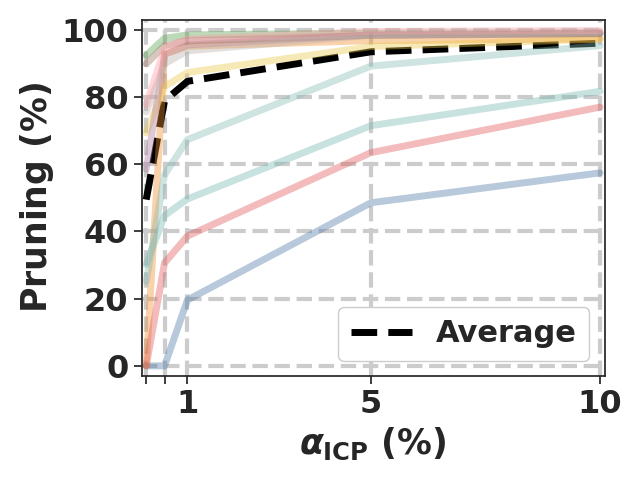}          &
         \includegraphics[width=.19\linewidth]{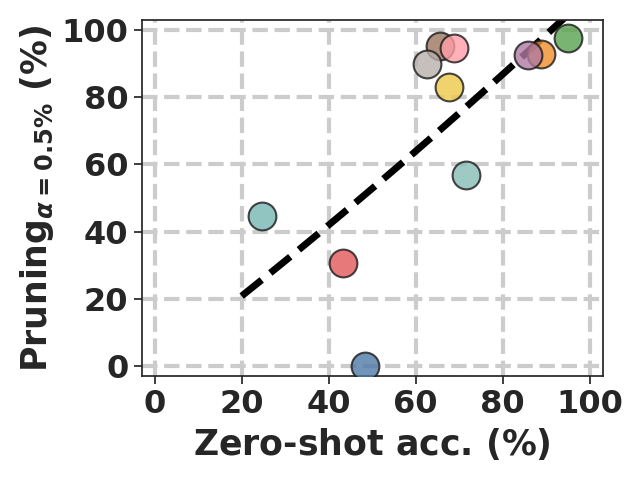}           & 
        \includegraphics[width=.19\linewidth]{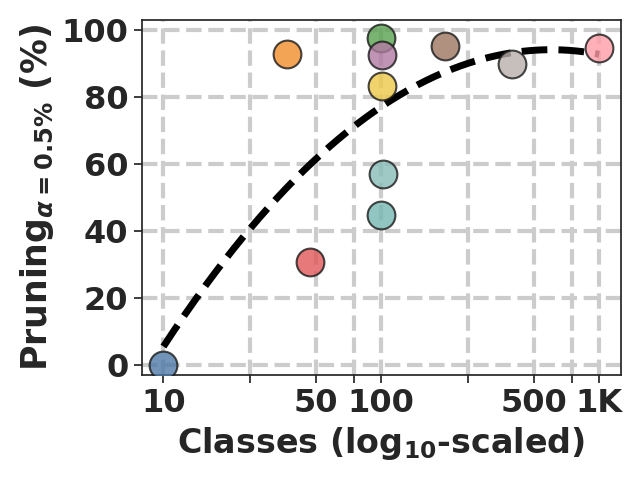}
           & 
        \includegraphics[width=.19\linewidth]{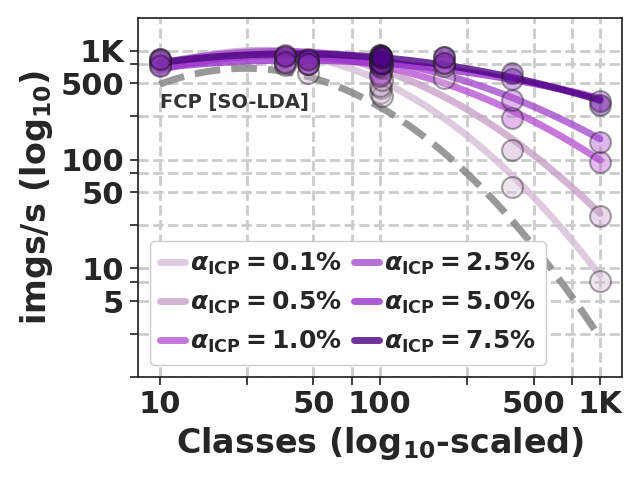}  
           & 
        \includegraphics[width=.19\linewidth]{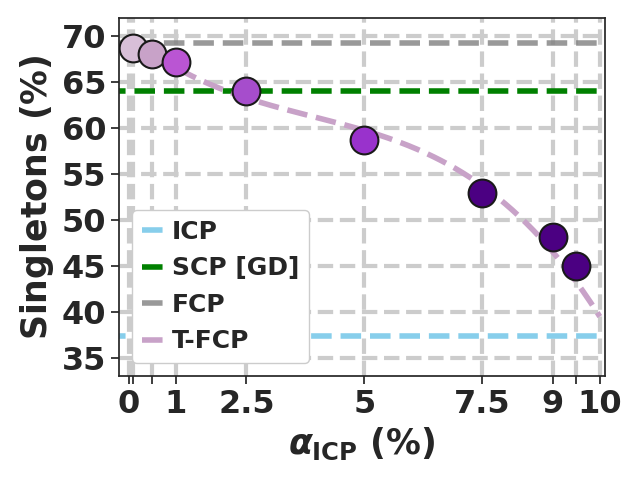}  \\

        \multicolumn{3}{c}{\small{(a) Pruning control at ICP stage (results per dataset).}} & \multicolumn{2}{c}{ \small{(b) Latency vs. convervativeness.}}

        \end{tabular}
        \caption{Ablation studies on T-FCP. Results using CLIP ViT-B/16 as backbone, $N=C\times16$ samples for calibration, and conformal sets created at $\alpha=0.1$ error rate.}
        \label{fig:alpha_icp}
    \end{center}
\end{figure*}

\textbf{Robustness to smaller calibration samples.} Figure~\ref{fig:data_regimes}~(a-d) tests the robustness of the different conformal procedures under smaller calibration sets. Focusing on the empirical coverage dispersion Figure~\ref{fig:data_regimes}~(b), as theoretically expected, all procedures reduce it as more calibration data become available. In this context, T-FCP enables more consistent coverage than SCP across all data regimes, while providing consistently more efficient sets (Figure~\ref{fig:data_regimes}~(c-d)). In the extreme low-data regime, i.e., $N=C\times 4$, we observe a slight instability of the full conformal procedure, Figure~\ref{fig:data_regimes}~(a), which provides slightly lower marginal coverage than desired, an empirical instability already observed in prior work~\cite{martinez2023approximating}. However, we would like to point out that the large variance of the SCP procedure ($\sim\pm4.5$) makes its realized coverage also less stable in this regime.

\textbf{SO-LDA configuration and performance.} Figure~\ref{fig:data_regimes}~(e) compares the discriminative performance of the different relevant linear solvers across increasing data regimes. First, it is worth noting that methods that rely solely on class centers, e.g., Simple-shot (SS)~\cite{simpleshot,fca25}, despite their computational efficiency, fall short in performance compared to the best performing method, i.e., using iterative gradient-descent updates (GD), which makes them uncompetitive for full conformal procedures compared to SCP. Our proposed stabilized LDA solver with zero-shot guided feature centering in Eq.~\ref{eq:so_residual} consistently improves the performance of the learned prototypes compared to vanilla LDA~\cite{wang2024a}. We attribute this behavior to the fact that visual examples within a category may be highly correlated in a specific visual domain, whereas textual information can improve the estimation of the main directions of dispersion in class distributions. Notably, SO-LDA reached performances closer to the expensive iterative GD solver, just $\sim1.5\%$ below. The constraint that learned prototypes remain close to their initial configurations in Eq.~\ref{eq:so_mean} helped stabilize performance gains in lower data regimes, as already observed in prior work on transfer learning from VLMs~\cite{Khattak_2023_ICCV,lin2023multimodality,clap24,sstext25}.

\begin{figure*}[h!]
    \begin{center}
        \setlength{\tabcolsep}{1.9pt}
        \begin{tabular}{ccccc}

         \includegraphics[width=.19\linewidth]{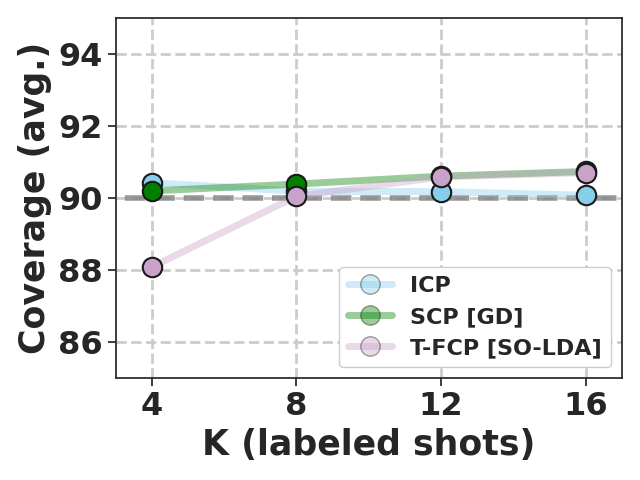}          &
          \includegraphics[width=.19\linewidth]{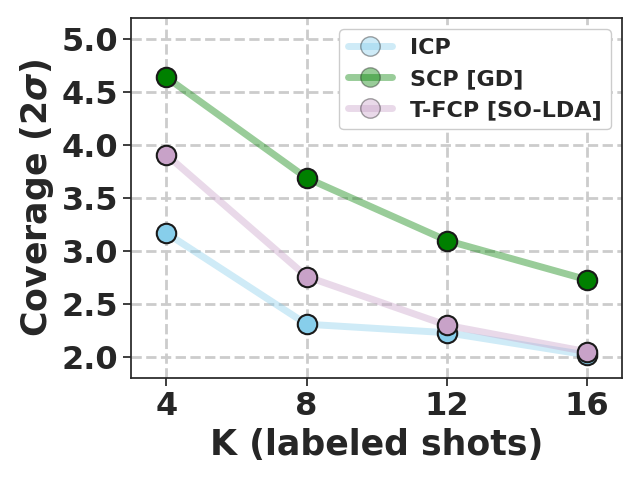}          &
         \includegraphics[width=.19\linewidth]{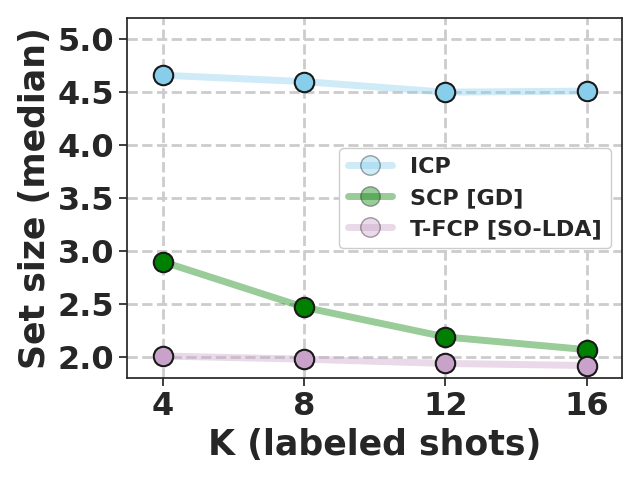}           & 
        \includegraphics[width=.19\linewidth]{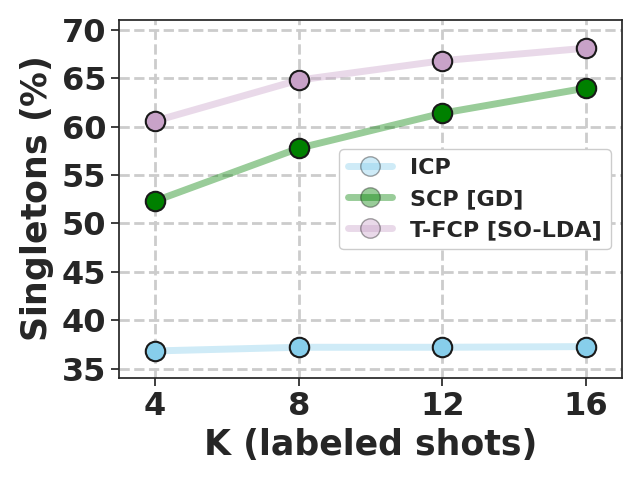}           & 
         \includegraphics[width=.19\linewidth]{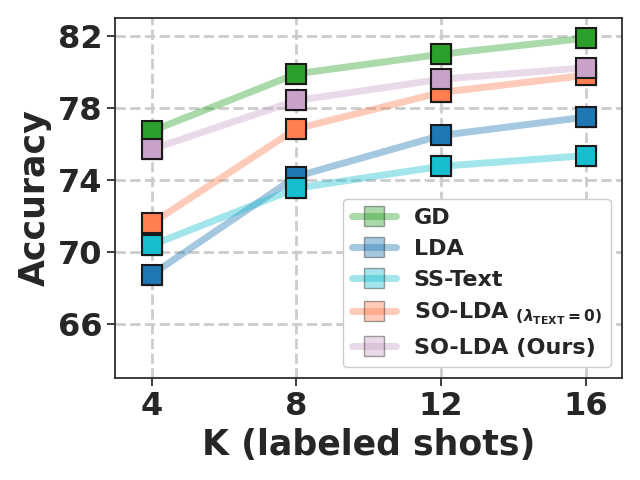}           \\
         \small{(a)} & \small{(b)} & \small{(c)} & \small{(d)} & \small{(e)}

        \end{tabular}
        \caption{Studies at lower data regimes: (a-d) conformal procedures, and (e) few-shot solvers. Results using CLIP ViT-B/16 as backbone, and conformal sets at $\alpha=0.1$, averaged for 11 datasets.}
        \label{fig:data_regimes}
    \end{center}
\end{figure*}

\textbf{Performance atop other VLMs.} We evaluate the conformal procedures on additional backbones, including CLIP~\cite{radford2021learning} and MetaCLIP~\cite{xu2024demystifying}, in Figure~\ref{fig:backbones}. The merits of T-FCP hold across all backbones, providing consistently more concentrated empirical coverage distributions than SCP, with variances $\sim30\%$ smaller, while maintaining valid coverage. Moreover, predictive accuracy and set efficiency are consistently improved, especially compared to ICP. This is even the case for larger-scale backbones, e.g., MetaCLIP ViT-L/14, for which the median set size is reduced by $31\%$, without sacrificing finite-sample coverage stability. Despite the strong reliance of T-FCP and our SO-LDA solver on zero-shot prototypes, performance remains robust also in smaller backbones, e.g., ViT-B/32.

\begin{figure*}[h!]
    \begin{center}
        \setlength{\tabcolsep}{1.9pt}
        \begin{tabular}{cc}

        \includegraphics[width=.49\linewidth]{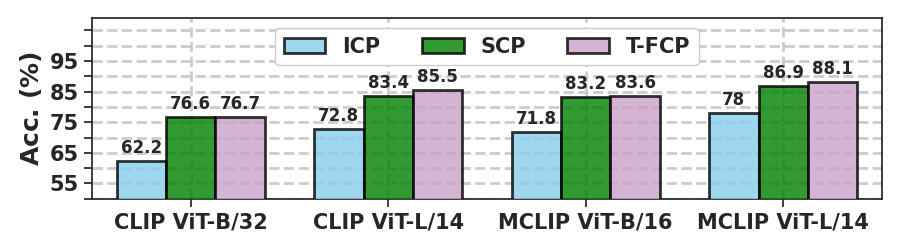} &
        \includegraphics[width=.49\linewidth]{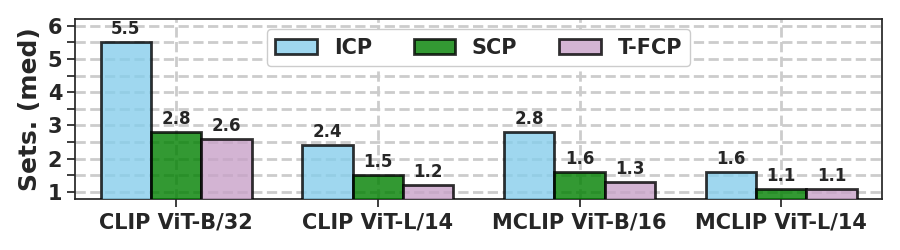} \\
        \includegraphics[width=.49\linewidth]{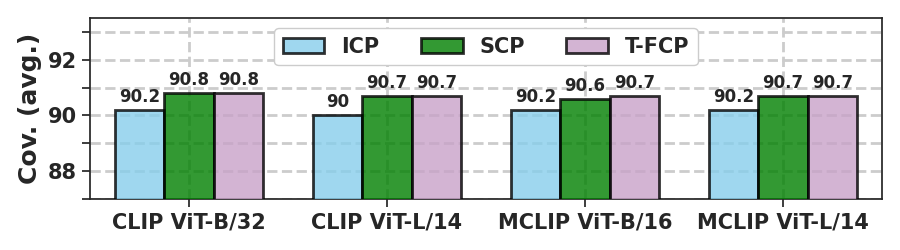} & 
        \includegraphics[width=.49\linewidth]{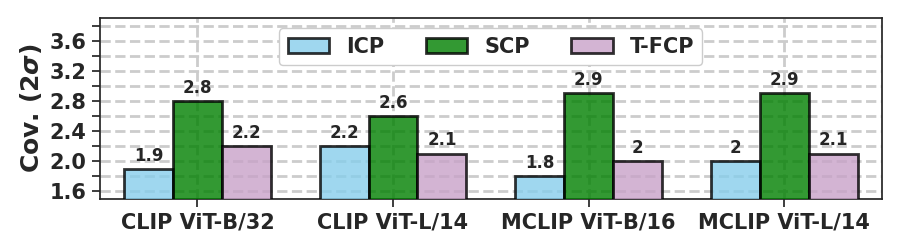} \\

        \end{tabular}
        \caption{Generalization atop other CLIP~\cite{radford2021learning} and MetaCLIP~\cite{xu2024demystifying} backbones of conformal procedures. Results using $N=C\times16$ samples and $\alpha=0.1$, averaged across 11 datasets.}
        \label{fig:backbones}
    \end{center}
\end{figure*}
\vspace{-5mm}

% ---- Conclusions ----
\section{Discussion}
\label{sec:conc}

We presented T-FCP and SO-LDA, enabling full conformal prediction on large-scale image datasets by leveraging zero-shot VLMs. Our experiments highlight that data-efficient procedures yield more stable empirical coverage than split-based alternatives and underscore the importance of data efficiency in conformal image classification.

\textbf{Limitations.} The proposed linear solver is still less performant than iterative gradient-based methods. Although this gap is partially compensated for in FCP by fitting on each candidate-augmented dataset, we cannot guarantee that the resulting prediction sets will always be more efficient than those from SCP, which can use more expensive solvers during its adaptation stage. A further limitation is that our main SO-LDA implementation uses diagonal loading to stabilize the precision matrix, thereby approximating recomputing a fully regularized LDA estimator for each augmented dataset. Therefore, although T-FCP offers theoretically valid coverage, the same is not true when it is combined with our SO-LDA solver. In practice, ablations in Appendix~\ref{app_solda} show negligible differences against exact implementations. Finally, the coverage guarantee of T-FCP is based on an additive union-bound allocation across the pruning and full-conformal stages. Consequently, the resulting sets may be slightly more conservative than FCP's. Hence, FCP may be preferred for smaller-scale tasks, e.g., those involving tens or a few hundred classes, especially when coupled with our online LDA solver.

% ---- References ----
\bibliographystyle{plain} % or your preferred style (see below)
\bibliography{refs.bib} % your bib file name without the .bib extension

% ---- Appendix ----
\newpage
\appendix

\section{Broader Impacts}
\label{app_impact}

\textbf{Broader impacts.} Conformal prediction is often motivated by providing theoretical guarantees in high-stakes deployment. However, it should be interpreted with care. Marginal coverage does not imply conditional or subgroup coverage, and guarantees need not hold under distribution shift. Moreover, finite-sample empirical coverage can fall below the nominal level in a real-world deployment, especially with data-inefficient pipelines. In this work, we directly address the latter use case, enabling more stable conformal image classifiers via full conformal prediction. We do not anticipate negative societal impacts from our work beyond those of standard conformal prediction, and we identify no domain-specific risks requiring special attention.

\section{Proofs}
\label{app_proof}

\begin{proof}[Proof of Proposition~\ref{prop:intersection}]
According to Eq \ref{combination}:
\begin{align}
    \mathbb{P}\big(Y \notin \mathcal{C}_{1\cap2}(\mathbf{x})\big)
    &= \mathbb{P}\big( Y \notin \mathcal{C}_1(\mathbf{x}) \;\cup\; Y \notin \mathcal{C}_2(\mathbf{x}) \big).
\end{align}
By the union bound,
\begin{align}
    \mathbb{P}\big(Y \notin \mathcal{C}_{1\cap2}(\mathbf{x})\big)
    \leq \mathbb{P}\big(Y \notin \mathcal{C}_1(\mathbf{x})\big)
    + \mathbb{P}\big(Y \notin \mathcal{C}_2(\mathbf{x})\big).
\end{align}
Using the marginal coverage guarantees for each individual procedure in Eq. \ref{alpha1alpha2},
\begin{align}
    \mathbb{P}\big(Y \notin \mathcal{C}_{1\cap2}(\mathbf{x})\big) \leq \alpha_1 + \alpha_2,
\end{align}
which implies the result.
\end{proof}

\section{Stability of SO-LDA}
\label{app_solda}
In this section, we further analyze SO-LDA, particularly the implications of the mild asymmetry introduced by the diagonal covariance matrix regularization used in the main implementation, whose diagonal is estimated only from calibration data to enable efficient rank-one updates. We compare such a solution with different configurations:
\begin{itemize}
    \item \textbf{Ridge}: the regularization is imposed via a fixed identity loading, $\mathbf{S}^{\mathrm{reg}}_{(N)}=\mathbf{S}_{(N)}+\lambda_{\mathrm{RIDGE}}\mathbf{I}$, which is compatible with permutation-invariant rank-one updates under fixed regularization.
    \item \textbf{Non-online}: for each candidate sample-label pair, it recomputes the class means and the covariance matrix from the full augmented dataset, then directly inverts the regularized covariance matrix. This solver increases the per-label computational cost to $\mathcal{O}(NF^2 + F^3)$.
    \item \textbf{Unstable}: does not use zero-shot prototypes for the residuals estimation as in Eq.~\ref{eq:so_residual}. Hence, the centered features are calculated asymmetrically between calibration and test data.
\end{itemize}
Comparative performance on different data regimes is presented in Table~\ref{tab:solda}. The results suggest that the dominant source of instability is the asymmetric centering of the residuals used for covariance estimation, with the \textbf{Unstable} configuration yielding the largest marginal coverage gap and lower ratio of valid occurrences. The fixed-\textbf{Ridge} configuration is the cleanest variant with respect to symmetric online updates. However, its discriminative performance is lower than that of our diagonal regularization in SO-LDA. Moreover, both exhibit a similar coverage distribution, suggesting that the asymmetry introduced by our implementation choice is negligible. Also, the performance levels achieved by SO-LDA are similar to those obtained with more expensive, exact \textbf{Non-online} updates. 

The under-coverage for $K=4$ is observed in both online and exact recomputation variants, suggesting it is not primarily caused by the diagonal-loading approximation. As this instability in exact FCP also appears in prior work~\cite{martinez2023approximating} (e.g., see their results at their Figure 4(d)), we leave a complete characterization of this regime to future work.

\begin{table}[h!]
\setlength{\tabcolsep}{2.0pt}  % default is ~6pt
\centering
\caption{Study of the SO-LDA solver configurations under T-FCP. Average results for 10 datasets. In contrast to the main manuscript, these studies use $20$ random seeds and exclude ImageNet.}
\label{tab:solda}
\small

\begin{tabular}{cc}

\begin{tabular}{lll c ccc cc}
\toprule
\multicolumn{3}{c}{}
& \multicolumn{3}{c}{Cov. ($\alpha = 0.10$)} 
& \multicolumn{2}{c}{Set size} \\
\cmidrule(lr){4-6} \cmidrule(lr){7-8}
& & Acc. 
& Avg. & $2\sigma$ & \%Valid 
& Mean & \%Sing. \\ \midrule
\textbf{SO-LDA}                 & & 81.0 & 90.8 & 2.1 & 80.0 & 2.3 & 70.3   \\ \midrule
Non-online                      & & 81.0 & 90.8 & 2.1 & 80.0 & 2.3 & 70.2  \\
Ridge                           & & 78.8 & 90.7 & 2.3 & 76.0 & 2.4 & 64.6  \\ 
Unstable                        & & 78.8 & 90.4 & 2.3 & 67.5 & 2.6 & 62.3  \\
\bottomrule
\end{tabular} 

& 

\begin{tabular}{lll c ccc cc}
\toprule
\multicolumn{3}{c}{}
& \multicolumn{3}{c}{Cov. ($\alpha = 0.10$)} 
& \multicolumn{2}{c}{Set size} \\
\cmidrule(lr){4-6} \cmidrule(lr){7-8}
& & Acc. 
& Avg. & $2\sigma$ & \%Valid 
& Mean & \%Sing. \\ \midrule
\textbf{SO-LDA}                & & 76.4 & 88.2 & 4.2 & 27.0 & 3.1 & 61.6   \\ \midrule
Non-online                     & & 76.4 & 88.3 & 4.2 & 27.5 & 3.2 & 61.3  \\
Ridge                          & & 74.1 & 88.0 & 4.3 & 26.0 & 2.9 & 58.4  \\
Unstable                       & & 70.1 & 86.0 & 5.3 & 13.0 & 4.6 & 45.9  \\
\bottomrule
\end{tabular} \\
(a) $N=C \times 16$ & (b) $N=C \times 4$ \\

\end{tabular}

\end{table}

\section{Datasets}
\label{app_datasets}

A summary of the datasets, number of classes, tasks, and number of samples in the used test partition is depicted in Table~\ref{datasets}.

\begin{table*}[!ht]
\centering
\caption{Datasets overview. “$b$"/“$i$" denotes balanced/imbalanced.}
\label{datasets}
\small
\vspace{1mm}
\begin{tabular}{lrrrrcl}
\toprule
\multicolumn{1}{c}{Dataset} & Classes & \multicolumn{3}{c}{Splits} & $b$/$i$ & \multicolumn{1}{c}{Task description} \\
                            & & \multicolumn{1}{c}{Train} & \multicolumn{1}{c}{Val} & \multicolumn{1}{c}{Test} & & \\
\midrule
EuroSAT \cite{eurosat}                  & 10    & 13,500 & 5,400  & 8,100   & $i$ & Satellite image classification.  \\  
OxfordPets \cite{oxfordpets}            & 37    & 2,944  & 736    & 3,669   & $i$ & Pets classification.                     \\
DTD \cite{dtd}                          & 47    & 2,820  & 1,128  & 1,692   & $b$ & Textures classification.                 \\
FGVCAircraft \cite{aircraft}            & 100   & 3,334  & 3,333  & 3,333   & $i$ & Aircraft classification. \\    
Caltech101 \cite{caltech}               & 100   & 4,128  & 1,649  & 2,465   & $i$ & Natural objects classification.          \\
UCF101 \cite{ucf101}                    & 101   & 7,639  & 1,898  & 3,783   & $i$ & Action recognition.    \\           
Food101 \cite{food101}                  & 101   & 50,500 & 20,200 & 30,300  & $b$ & Foods classification.                    \\
Flowers102 \cite{flowers102}            & 102   & 4,093  & 1,633  & 2,463   & $u$ & Flowers classification.                  \\
StanfordCars \cite{stanfordcars}        & 196   & 6,509  & 1,635  & 8,041   & $i$ & Cars classification.                     \\
SUN397 \cite{sun397}                    & 397   & 15,880 & 3,970  & 19,850  & $b$ & Scenes classification.  \\ 
ImageNet \cite{deng2009imagenet}        & 1,000 & 1.28M & - & 50,000 & $b$  & Natural objects recognition. \\
                                 
\bottomrule
\end{tabular}
\end{table*}

\section{Baselines}
\label{app_baselines}

\textbf{Gradient-descent linear probe (GD).} We followed training details similar to~\cite{clap24}, which remain fixed for all datasets. The learned class weights are initialized with zero-shot prototypes. Full-batch gradient descent is performed for $300$ iterations, minimizing cross-entropy. SGD is used as the optimizer with momentum $0.9$. A cosine-annealing scheduler with a base learning rate of $0.1$ is applied to ensure a small step size and proper convergence.

\textbf{ICP-T via Conf-OT~\cite{confot25}.} The logits are extracted using the zero-shot prototypes, and label-marginal distributions are estimated from calibration data. The Optimal Transport solver is applied to the joint calibration and test sets, with the entropic weight $\tau=1$ and $T_{\text{OT}} = 3$, as default values in~\cite{confot25}.

\textbf{ICP-T via TIM~\cite{scat}.} We use the Transductive Information Maximization solver with regularization on the expected label-marginal distribution, estimated from calibration labels. Training is carried out on the joint calibration and test samples. The relative weight $\lambda$ is set to $1.0$, as in~\cite{scat}. Similarly to GD, full-batch gradient descent is performed for $300$ iterations, in this case optimizing the mutual information criteria. Other details are as in GD, since they provided satisfactory convergence.

\section{Detailed Results}

\subsection{Additional configurations}

We report the performance of the proposed solver SO-LDA for split conformal prediction in Table~\ref{tab:conformal_results_ext}.
\begin{table}[h!]
\setlength{\tabcolsep}{2.6pt}  % default is ~6pt
\centering
\caption{Comparison of conformal procedures atop CLIP ViT-B/16 using $N=C\times16$. This table reports the results using configurations not included in Table~\ref{tab:conformal_results}.}
\vspace{-1mm}
\label{tab:conformal_results_ext}
\small
\begin{tabular}{lll | c | ccc ccc | ccc ccc}
\toprule
\multicolumn{4}{c}{}
& \multicolumn{3}{c}{Cov. ($\alpha = 0.10$)} 
& \multicolumn{3}{c}{Set size}
& \multicolumn{3}{c}{Cov. ($\alpha = 0.05$)} 
& \multicolumn{3}{c}{Set size} \\
\cmidrule(lr){5-7} \cmidrule(lr){8-10}
\cmidrule(lr){11-13} \cmidrule(lr){14-16}
& & & Acc. 
& Avg. & $2\sigma$ & \%Valid 
& Mean & Med. & \%Sing. 
& Avg. & $2\sigma$ & \%Valid 
& Mean & Med. & \%Sing. \\
\midrule
SCP & SO-LDA & & 79.5 & 90.8 & 3.5 & 71.3 & 2.7 & 2.3 & 69.4 & 95.6 & 2.4 & 77.0 & 4.1 & 3.3 & 58.4 \\
\bottomrule
\end{tabular}
\end{table}

\subsection{Table~\ref{tab:conformal_results}: detailed results per dataset}
\label{app_detailedresults}

The detailed results of the conformal procedures' performance across datasets are reported in Table~\ref{tab:conformal_results_detailed}.

\begin{table}[h!]
\setlength{\tabcolsep}{2.2pt}  % default is ~6pt
\centering
\caption{Per-dataset conformal prediction results of average performances of the main conformal procedures reported in Table~\ref{tab:conformal_results}, using CLIP ViT-B/16.}
\vspace{-1mm}
\label{tab:conformal_results_detailed}
\small
\begin{tabular}{llll | c | ccc ccc | ccc ccc}
\toprule
& \multicolumn{4}{c}{}
& \multicolumn{3}{c}{Cov. ($\alpha = 0.10$)} 
& \multicolumn{3}{c}{Set size}
& \multicolumn{3}{c}{Cov. ($\alpha = 0.05$)} 
& \multicolumn{3}{c}{Set size} \\
\cmidrule(lr){6-8} \cmidrule(lr){9-11}
\cmidrule(lr){12-14} \cmidrule(lr){15-17}
& & & & Acc. 
& Avg. & $2\sigma$ & \%Valid 
& Mean & Med. & \%Sing. 
& Avg. & $2\sigma$ & \%Valid 
& Mean & Med. & \%Sing. \\
\midrule
\multirow{3}{*}{\scriptsize{\rotatebox{90}{EuroSAT}}} &
ICP & ZS & & 48.4 & 90.4 & 6.1 & 62.0 & 4.2 & 4.4 & 9.4 & 95.6 & 4.8 & 66.0 & 5.2 & 5.3 & 6.6 \\
& SCP & GD & & 83.4 & 89.8 & 6.5 & 50.0 & 1.4 & 1.1 & 73.4 & 96.1 & 4.7 & 76.0 & 2.2 & 1.8 & 43.8 \\
& \textbf{T-FCP} & 
\textbf{SO-LDA} & & 83.0 & 90.2 & 5.3 & 58.0 & 1.4 & 1.0 & 71.4 & 95.9 & 3.9 & 90.0 & 2.0 & 1.9 & 42.0 \\
\midrule
\multirow{3}{*}{\scriptsize{\rotatebox{90}{Pets}}} & 
ICP & ZS & & 88.8 & 89.9 & 3.2 & 54.0 & 1.0 & 1.0 & 92.9 & 95.1 & 2.0 & 68.0 & 1.2 & 1.0 & 80.4 \\
& SCP & GD & & 92.3 & 90.0 & 4.9 & 65.0 & 1.0 & 1.0 & 94.8 & 95.2 & 3.3 & 66.0 & 1.1 & 1.0 & 91.0 \\
& \textbf{T-FCP} & \
\textbf{SO-LDA} & & 92.8 & 90.3 & 2.7 & 70.0 & 1.0 & 1.0 & 93.6 & 95.6 & 2.2 & 82.0 & 1.1 & 1.0 & 91.0 \\
\midrule
\multirow{3}{*}{\scriptsize{\rotatebox{90}{DTD}}} &
ICP & ZS & & 43.4 & 89.9 & 2.9 & 66.0 & 10.8 & 11.1 & 6.75 & 95.0 & 1.9 & 72.0 & 17.2 & 18.6 & 4.3 \\
& SCP & GD & & 66.6 & 90.2 & 4.4 & 55.0 & 3.4 & 2.8 & 28.3 & 95.1 & 2.6 & 72.0 & 6.2 & 5.3 & 16.7 \\
& \textbf{T-FCP} & 
\textbf{SO-LDA} & & 69.7 & 90.3 & 2.9 & 66.0 & 3.2 & 2.0 & 43.2 & 95.0 & 2.0 & 70.0 & 6.1 & 3.4 & 29.7 \\
\midrule
\multirow{3}{*}{\scriptsize{\rotatebox{90}{Aircraft}}} & 
ICP & ZS & & 24.8 & 90.0 & 1.3 & 76.0 & 18.4 & 18.1 & 0.8 & 95.2 & 1.2 & 90.0 & 28.6 & 28.0 & 0.1 \\
& SCP & GD & & 41.4 & 90.1 & 2.3 & 65.0 & 8.8 & 8.8 & 3.7 & 95.0 & 1.8 & 70.0 & 12.7 & 12.7 & 2.1 \\
& \textbf{T-FCP}  & 
\textbf{SO-LDA} & & 39.6 & 90.2 & 2.1 & 70.0 & 9.4 & 8.9 & 7.2 & 95.1 & 1.5 & 78.0 & 13.1 & 12.5 & 5.9 \\
\midrule
\multirow{3}{*}{\scriptsize{\rotatebox{90}{Caltech}}} & 
ICP & ZS  & & 95.1 & 90.4 & 1.6 & 88.0 & 0.9 & 1.0 & 93.4 & 96.2 & 1.0 & 100.0 & 1.0 & 1.0 & 94.0 \\
& SCP & GD & & 97.8 & 94.9 & 1.4 & 100.0 & 1.0 & 1.0 & 95.5 & 97.8 & 0.9 & 100.0 & 1.0 & 1.0 & 99.0 \\
& \textbf{T-FCP} & 
\textbf{SO-LDA} & & 97.9 & 94.1 & 0.9 & 100.0 & 1.0 & 1.0 & 94.7 & 97.3 & 0.9 & 100.0 & 1.0 & 1.0 & 98.4 \\
\midrule
\multirow{3}{*}{\scriptsize{\rotatebox{90}{UCF101}}} & 
ICP & ZS & & 67.6 & 90.1 & 1.7 & 74.0 & 2.9 & 2.0 & 29.3 & 95.1 & 1.4 & 80.0 & 5.1 & 3.9 & 16.4 \\
& SCP & GD & & 89.3 & 90.3 & 3.0 & 70.0 & 1.0 & 1.0 & 90.5 & 95.1 & 1.8 & 68.0 & 1.3 & 1.0 & 75.1 \\
& \textbf{T-FCP} & 
\textbf{SO-LDA} & & 89.6 & 90.5 & 1.9 & 86.0 & 1.0 & 1.0 & 94.8 & 95.4 & 1.4 & 86.0 & 1.3 & 1.0 & 79.2 \\
\midrule
\multirow{3}{*}{\scriptsize{\rotatebox{90}{Food101}}} & 
ICP & ZS & & 86.0 & 90.1 & 2.0 & 72.0 & 1.1 & 1.0 & 85.1 & 95.1 & 1.4 & 72.0 & 1.6 & 1.0 & 63.4 \\
& SCP & GD & & 86.3 & 90.0 & 2.9 & 60.0 & 1.1 & 1.0 & 86.8 & 94.8 & 2.0 & 60.0 & 1.5 & 1.0 & 66.5 \\
& \textbf{T-FCP} & 
\textbf{SO-LDA} & & 86.4 & 90.4 & 1.9 & 80.0 & 1.2 & 1.0 & 87.8 & 95.4 & 1.6 & 90.0 & 1.7 & 1.0 & 68.3 \\
\midrule
\multirow{3}{*}{\scriptsize{\rotatebox{90}{Flowers}}} & 
ICP & ZS & & 71.5 & 90.3 & 0.8 & 98.0 & 4.9 & 3.9 & 19.7 & 95.2 & 0.8 & 96.0 & 11.1 & 9.4 & 5.67 \\
& SCP & GD & & 96.5 & 92.2 & 2.1 & 100.0 & 0.9 & 1.0 & 93.0 & 96.3 & 1.4 & 98.0 & 1.0 & 1.0 & 96.9 \\
& \textbf{T-FCP} & 
\textbf{SO-LDA} & & 95.7 & 90.9 & 1.6 & 96.0 & 0.9 & 1.0 & 92.9 & 95.8 & 1.1 & 100.0 & 1.0 & 1.0 & 96.1 \\
\midrule
\multirow{3}{*}{\scriptsize{\rotatebox{90}{Cars}}} & 
ICP & ZS & & 65.5 & 90.1 & 1.0 & 86.0 & 2.4 & 2.0 & 27.6 & 95.2 & 0.9 & 88.0 & 3.4 & 3.0 & 17.3 \\
& SCP & GD & & 78.6 & 90.3 & 1.8 & 90.0 & 1.5 & 1.0 & 59.3 & 95.2 & 1.1 & 86.0 & 2.0 & 2.0 & 38.3 \\
& \textbf{T-FCP} & 
\textbf{SO-LDA} & & 80.8 & 90.4 & 1.2 & 90.0 & 1.4 & 1.0 & 65.4 & 95.3 & 0.9 & 96.0 & 1.9 & 2.0 & 45.9 \\
\midrule
\multirow{3}{*}{\scriptsize{\rotatebox{90}{SUN397}}} & 
ICP & ZS & & 62.6 & 90.2 & 1.0 & 90.0 & 3.5 & 3.0 & 16.2 & 95.1 & 0.7 & 98.0 & 6.6 & 5.3 & 7.1 \\
& SCP & GD & & 73.7 & 90.0 & 1.5 & 80.0 & 1.9 & 2.0 & 40.8 & 95.1 & 1.1 & 84.0 & 3.2 & 2.9 & 22.0 \\
& \textbf{T-FCP} & 
\textbf{SO-LDA} & & 74.3 & 90.3 & 1.4 & 84.0 & 2.0 & 1.2 & 51.3 & 95.3 & 1.0 & 94.0 & 3.4 & 2.0 & 31.2 \\
\midrule
\multirow{3}{*}{\scriptsize{\rotatebox{90}{ImageNet}}} & 
ICP & ZS & & 68.7 & 90.0 & 0.7 & 90.0 & 2.8 & 2.0 & 28.9 & 95.0 & 0.6 & 94.0 & 5.5 & 4.0 & 15.0 \\
& SCP & GD & & 72.5 & 90.0 & 0.8 & 85.0 & 2.3 & 2.0 & 36.9 & 95.0 & 0.5 & 94.0 & 4.2 & 3.0 & 20.1 \\
& \textbf{T-FCP} & 
\textbf{SO-LDA} & & 72.9 & 90.3 & 0.7 & 100.0 & 2.5 & 2.0 & 46.7 & 95.4 & 0.4 & 100.0 & 5.7 & 3.0 & 27.9 \\
\bottomrule
\end{tabular}
\end{table}

\subsection{Results using APS}
\label{results_aps}

We report results using APS~\cite{aps} as the non-conformity score in Table~\ref{tab:results_aps}. They demonstrate that the gains in empirical coverage stability are not specific to LAC, but also generalize to APS: T-FCP reports smaller coverage dispersion and a larger ratio of realizations with satisfied coverage than ICP and SCP. We also incorporate class-conditional average coverage gap (CCV), the reference metric used for measuring adaptiveness \cite{ding2024class}, where T-FCP remains consistent.

\begin{table}[h!]
\setlength{\tabcolsep}{2.6pt}  % default is ~6pt
\centering
\caption{Results using APS. SO-LDA is used as solver, with $N=C\times16$. Average results for 10 datasets. These studies use 20 random seeds and exclude ImageNet.}
\vspace{-1mm}
\label{tab:results_aps}
\small
\begin{tabular}{ll | cccc ccc | cccc ccc}
\toprule
\multicolumn{2}{c}{}
& \multicolumn{4}{c}{Cov. ($\alpha = 0.10$)} 
& \multicolumn{3}{c}{Set size}
& \multicolumn{4}{c}{Cov. ($\alpha = 0.05$)} 
& \multicolumn{3}{c}{Set size} \\
\cmidrule(lr){3-6} \cmidrule(lr){7-9}
\cmidrule(lr){10-13} \cmidrule(lr){14-16}
&  
& Avg. & $2\sigma$ & \%Valid & CCV
& Mean & Med. & \%Sing. 
& Avg. & $2\sigma$ & \%Valid & CCV 
& Mean & Med. & \%Sing. \\
\midrule
ICP & & 90.4 & 2.2 & 81.5 & 8.2 & 6.3 & 4.9 & 34.4 & 95.3 & 1.5 & 85.0 & 5.0 & 10.1 & 8.0 & 27.1 \\
SCP & & 90.1 & 3.1 & 67.5 & 6.3 & 3.0 & 2.2 & 54.0 & 95.1 & 2.4 & 70.0 & 5.0 & 4.3 & 3.1 & 46.9 \\
T-FCP & & 90.4 & 1.7 & 84.0 & 6.4 & 3.3 & 2.3 & 52.6 & 95.3 & 1.4 & 89.0 & 4.9 & 4.5 & 2.9 & 46.9 \\
\bottomrule
\end{tabular}
\end{table}

It is worth noting that adaptive scores such as APS/RAPS require ranking or sorting over the label space. This adds non-negligible cost inside the candidate-wise FCP loop, which conflicts with our scalability focus. For example, for ImageNet, T-FCP [$\alpha_{\text{ICP}}=0.5\%, \ \alpha_{\text{FCP}}=9.5\%$] using LAC has a latency of nearly 30 ms/image, and it increases to nearly 220 ms/image using APS, which makes it currently less appealing to use for the largest-scale datasets for T-FCP and unfeasible for the more expensive vanilla FCP. 

\subsection{Extension to unimodal models}
\label{/results_unimodal}

The T-FCP construction is model-agnostic in principle: any valid low-cost conformal predictor could be used as the pruning stage, and any valid or efficient FCP scoring rule could be used as the second stage. Beyond VLMs, T-FCP can therefore be explored for unimodal models as long as an initial pruning proxy exists, e.g., a pretrained classifier over the target label space. 

To test this, we next transfer a standard ResNet-50 model pretrained on ImageNet to ImageNet-V2 (accuracy 69.1\%), since the two datasets share the same label space. T-FCP [$\alpha_{\text{ICP}}=0.5\%, \ \alpha_{\text{FCP}}=4.5\%$] early prunes 65\% of the labels, and provides a more stable empirical coverage distribution than split-conformal alternatives. Specific results are presented below, in Table~\ref{res:unimodal}.

\begin{table}[h!]
\setlength{\tabcolsep}{6.0pt}  % default is ~6pt
\centering
\caption{Conformal prediction results on ImageNet-V2 using ResNet-50 pre-trained on ImageNet, LAC, and 8 labeled samples per class as calibration data. Results across 20 seeds.}
\vspace{-1mm}
\label{res:unimodal}
\small
\begin{tabular}{lll | ccc c |}
\toprule
\multicolumn{3}{c}{}
& \multicolumn{3}{c}{Cov. ($\alpha = 0.05$)} 
& \multicolumn{1}{c}{Set size} \\ \cmidrule(lr){3-6} \cmidrule(lr){7-7}
& & & Avg. & $2\sigma$ & \%Valid 
& Mean \\
\midrule
ICP   & & & 95.2 & 1.3 & 80.0 & 12.3 \\
SCP   & & & 95.0 & 1.2 & 70.0 & 11.7 \\
T-FCP & & & 95.1 & 1.2 & 80.0 & 10.6 \\
\bottomrule
\end{tabular}
\end{table}

\end{document}